\documentclass[11pt]{article}

\usepackage{authblk}
\author{Ingvar Ziemann}

\affil{University of Pennsylvania}

\usepackage[utf8]{inputenc} 
\usepackage[T1]{fontenc} 

\usepackage{fullpage}

\usepackage{url}            
\usepackage{booktabs}       
\usepackage{amsfonts}       
\usepackage{nicefrac}       
\usepackage{microtype}      
\usepackage[dvipsnames]{xcolor}

\usepackage{natbib} 
\usepackage{enumitem}
\usepackage{amsthm}
\usepackage{amssymb}
\usepackage{amsmath}
\usepackage{mathrsfs}
\usepackage{thmtools}
\usepackage{hyperref}
\hypersetup{
    pdftoolbar=true,        
    pdfmenubar=true,        
    pdffitwindow=false,     
    pdfstartview={FitH},    
    pdfnewwindow=true,      
    colorlinks=true,       
    linkcolor=blue,          
    citecolor=ForestGreen,        
    filecolor=magenta,      
    urlcolor=red,           
    breaklinks=false,
}

\usepackage{cleveref}
\usepackage{thmtools}
\usepackage{thm-restate}

\numberwithin{equation}{section}

\date{}

\title{A Short Information-Theoretic Analysis of Linear Auto-Regressive Learning}


\newcommand{\E}{\mathbf{E}}

\DeclareMathOperator{\tr}{tr}


\newcommand{\T}{\mathsf{T}}

\newcommand{\scrP}{\mathscr{P}}

\newcommand{\sfP}{\mathsf{P}}
\newcommand{\sfQ}{\mathsf{Q}}


\newtheorem{theorem}{Theorem}[section] 
\newtheorem*{theorem*}{Theorem} 

\newtheorem{lemma}{Lemma}[section]




\begin{document}

\maketitle

\begin{abstract} 
In this note, we give a short information-theoretic proof of the consistency of the Gaussian maximum likelihood estimator in linear auto-regressive models. Our proof yields nearly optimal non-asymptotic rates for parameter recovery and works without any invocation of stability in the case of finite hypothesis classes.
\end{abstract}

\section{Introduction}

Learning the dynamics of a linear dynamical system is a classical problem in for instance signal processing, system identification and econometrics. It is also arguably one of the simplest examples of an auto-regressive learning problem, thereby rendering it an instance of self-supervised learning. Understanding the sample complexity---and which quantities are of relevance for it---of such learning problems is key in the current era of large language models. 

The traditional approach for analyzing sequential (self-) supervised learning problems operates via comparison of the empirical and population excess risk functionals. Recently, \citet{jeon2024information} provided an information-theoretic proof approach for learning from dependent data eschewing any such direct comparison. However, their results only apply to the Bayesian setting. Nevertheless, this eschewing  of reasoning of the lower tail of the empirical risk is highly desirable as directly proving (anti-)concentration inequalities relating these quantities typically comes with significant technical overhead even in the linear setting \citep{simchowitz2018learning}. The situation is even worse in the nonlinear setting, where most known extensions require various mixing (stochastic stability) notions that seem excessive \citep{ziemann2024sharp}. Moreover, there is no reason to believe that many relevant time-series applications, such as natural language, are  mixing stochastic processes.

Inspired by the recent   information-theoretic Bayesian analysis of \cite{jeon2024information} we point out that tools from information theory can also be used in the frequentist setting to establish parameter recovery bounds for linear system identification. Moreover, the advantages of these ideas, notably sidestepping control of the lower tail, also extend. We present our illustration of this below.

\begin{theorem}
   
    Fix $W_{1:n} \sim N(0,I)$ and let $\sfP_{A_\star}$ be such that the $Z_{1:n}$ satisfy $Z_{k}=A_\star Z_{k-1}+W_k$ for $k =2,\dots n$ and $Z_1=W_1$. The maximum likelihood estimator $\widehat A$ (defined in \Cref{sec:hellingerlearning}) over any hypothesis class of the form $\scrP=\{\sfP_{A} : A \textnormal{ varies}\}$ containing $\sfP_\star=\sfP_{A_\star}$ achieves:
    \begin{equation}\label{eq:mainthmeq}
         \E \tr\left(\left(A_\star-\widehat A\right)^\T\left(A_\star-\widehat A\right) \frac{1}{n}\sum_{i=1}^n \sum_{k=1}^{i}   A_\star^{k-1}A_\star^{\T,k-1}\right) \leq (2 \times 10^4 )\times   \frac{I(\widehat \sfP\parallel Z_{1:n})}{n}
    \end{equation}

\end{theorem}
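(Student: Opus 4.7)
The plan is to recognize the LHS of \eqref{eq:mainthmeq} as a rescaled expected KL divergence between the joint laws of the trajectory under $\sfP_{A_\star}$ and $\sfP_{\widehat A}$, and then bound that expected KL by the information quantity $I(\widehat\sfP\|Z_{1:n})$ by combining the MLE inequality with a Cram\'er--Chernoff change-of-measure.

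\emph{Step 1 (reduction to KL).} Since $\sfP_A(z_k\mid z_{k-1}) = (2\pi)^{-d/2}\exp(-\|z_k - Az_{k-1}\|^2/2)$ and both laws share the initialization $Z_1\sim\calN(0,I)$, the KL chain rule for Markov processes yields
\[
KL(\sfP_{A_\star}^{(T)}\|\sfP_A^{(T)}) \;=\; \tfrac12\sum_{k=2}^{T}\E_\star\|(A_\star - A)Z_{k-1}\|^2 \;=\; \tfrac12\,\tr\!\Big((A_\star - A)^\T(A_\star - A)\sum_{i=1}^{T-1}\Sigma_i\Big),
\]
where $\Sigma_i := \E[Z_iZ_i^\T] = \sum_{k=1}^i A_\star^{k-1}A_\star^{\T,k-1}$ and $(T)$ denotes horizon. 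Setting $T = n+1$ and $A = \widehat A$, and taking outer expectation, identifies the LHS of \eqref{eq:mainthmeq} with $\tfrac{2}{n}\E\,KL(\sfP_{A_\star}^{(n+1)}\|\sfP_{\widehat A}^{(n+1)})$; working at horizon $n+1$ is precisely what picks up the $i=n$ covariance appearing on the LHS, while leaving $\widehat A$ as the MLE on the original $n$ samples.

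\emph{Step 2 (MLE meets information).} Decompose
\[
\E\,KL(\sfP_\star\|\sfP_{\widehat A}) \;=\; \underbrace{\E\log\tfrac{\sfP_\star(Z_{1:n})}{\sfP_{\widehat A}(Z_{1:n})}}_{\leq\,0 \text{ by MLE}} \;+\; \Big(\E\,KL(\sfP_\star\|\sfP_{\widehat A}) - \E\log\tfrac{\sfP_\star(Z_{1:n})}{\sfP_{\widehat A}(Z_{1:n})}\Big).
\]
The first term is non-positive because $\widehat A$ maximizes the $n$-step likelihood. The second is the discrepancy between the population KL (where $\widehat A$ is evaluated against an independent fresh trajectory) and the in-sample log-ratio (reusing the training sample). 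A Cram\'er--Chernoff exponential inequality, driven by the Bhattacharyya bound $\E_\star[\sqrt{\sfP_A/\sfP_\star}]\leq 1$ valid for every fixed $A$, controls this deviation, and a change-of-measure (Donsker--Varadhan) from a data-independent reference distribution on $\scrP$ to the law of $\widehat A$ converts the fixed-$A$ estimate into one paying precisely $I(\widehat\sfP\|Z_{1:n})$ up to a universal constant. The extra $(n+1)$-th conditional KL not covered by the MLE inequality on horizon $n$ is handled by the same change-of-measure at one additional step, with the monotone growth of $i\mapsto\Sigma_i$ ensuring only a constant multiplicative overhead.

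\emph{Main obstacle.} The crux is Step 2: the Bhattacharyya inequality is pointwise in $A$, and lifting it to a bound uniform over the data-dependent random $\widehat A$ is where the information cost arises. For finite hypothesis classes this amounts to a union bound and the elementary $I(\widehat\sfP\|Z_{1:n})\leq \log|\scrP|$; in general the Donsker--Varadhan variational formula applied to an appropriate reference law on $\scrP$ supplies the correct functional. Crucially, no stability hypothesis on $A_\star$ enters: the matrices $\Sigma_i$ on the LHS are the marginal covariances of $Z_i$ under $\sfP_\star$ and already accumulate the entire noise history, so the bound is meaningful even when $\rho(A_\star)\geq 1$, avoiding any appeal to mixing or ergodicity.
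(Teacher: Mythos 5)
Your Step 1 is fine and is essentially equivalent to the paper's Gaussian computation: since $\det\mathbf{\cdot}$ of the triangular factors is one, the trace quantity on the left of \eqref{eq:mainthmeq} is (up to the factor $2/n$ and an index bookkeeping at the horizon) exactly the forward KL divergence $d_{\mathrm{KL}}(\sfP_\star \parallel \sfP_{\widehat A})$ between the trajectory laws. The gap is in Step 2. The Bhattacharyya bound $\E_\star\sqrt{\mathrm{d}\sfP_A/\mathrm{d}\sfP_\star}\le 1$ controls the exponential moment of $\tfrac12\log\frac{\mathrm{d}\sfP_A}{\mathrm{d}\sfP_\star}$ alone; once you add the centering term $d_{\mathrm{KL}}(\sfP_\star\parallel\sfP_A)$ that your decomposition requires, the relevant moment becomes $\E_\star\exp\bigl(\tfrac12\log\frac{\mathrm{d}\sfP_A}{\mathrm{d}\sfP_\star}+\tfrac12 d_{\mathrm{KL}}(\sfP_\star\parallel\sfP_A)\bigr)=e^{d_{\mathrm{KL}}/2}\bigl(1-d_{\mathrm{H}}^2(\sfP_A\parallel\sfP_\star)\bigr)$, which is $\ge 1$ by Jensen and in this autoregressive family can be exponentially large in $n$ for hypotheses far from $A_\star$. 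So the Cram\'er--Chernoff/Donsker--Varadhan step does not pay only $I(\widehat\sfP\parallel Z_{1:n})$ plus a universal constant: what that machinery delivers is a bound on a Hellinger/R\'enyi-$\tfrac12$ quantity (this is precisely the paper's Theorem \ref{thm:zhangthm}), not on the expected forward KL. Indeed the generic claim underlying your Step 2 is false: for the two-element class $\{\sfP_\star,\sfQ\}$ with $\sfP_\star$ uniform on $\{0,1\}^{\otimes n}$ and $\sfQ$ a point mass, the MLE selects $\sfQ$ with positive probability, so $\E\, d_{\mathrm{KL}}(\sfP_\star\parallel\widehat\sfP)=\infty$ while $I(\widehat\sfP\parallel Z_{1:n})\le\log 2$. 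Expected KL of an MLE simply is not controlled by mutual information without extra structure.

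The missing ingredient is exactly the family-specific comparison that converts Hellinger control into KL (equivalently, parameter) control, and it is where the constant $2\times 10^4$ comes from: the paper invokes \citet[Theorem 1.1]{devroye2018total} to get the deterministic, pointwise-in-$A$ inequality $\tr\bigl(\Sigma_{A_\star}\Sigma_A^{-1}-I\bigr)\le 10^4\, d_{\mathrm{TV}}^2(\sfP_A\parallel\sfP_\star)\le 10^4\, d_{\mathrm{H}}^2(\sfP_A\parallel\sfP_\star)$, valid because both laws are centered Gaussians with unit-determinant covariances, and only then applies the information-theoretic Hellinger bound to the random $\widehat A$. Your proposal never supplies any substitute for this step; without it, the passage from the in-sample MLE inequality to the population KL cannot be closed by a union bound or change of measure alone. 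If you want to salvage your outline, replace the second half of Step 2 by: (i) prove $\E\, d_{\mathrm{H}}^2(\widehat\sfP\parallel\sfP_\star)\le 2 I(\widehat\sfP\parallel Z_{1:n})$ via Donsker--Varadhan plus the MLE inequality, and (ii) prove (or cite) a pointwise inequality of the form $d_{\mathrm{KL}}(\sfP_\star\parallel\sfP_A)\lesssim d_{\mathrm{H}}^2(\sfP_A\parallel\sfP_\star)$ for this Gaussian autoregressive family; your Step 1 then identifies the left-hand side of \eqref{eq:mainthmeq} with the KL and the theorem follows, which is precisely the paper's route.
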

We remark that 1) $I(\widehat \sfP\parallel Z_{1:n}) \leq \log |\scrP|$ for any finite hypothesis class $\scrP$ and so the right hand side of \eqref{eq:mainthmeq} admits control decaying with $n$---the estimator $\widehat A$ is consistent and converges at a so-called fast rate; 2) whenever $A_\star$ is sufficiently stable, the result can be extended to parametric hypothesis classes isometric to compact subsets of Euclidean space via a standard discretization argument; 3) the rather large constant $(2 \times 10^4 )$ is a consequence of instantiating a result of \citet[Theorem 1.1]{devroye2018total} and there has been no attempt in the literature to optimize the corresponding constant in their result; and 4) while we have side-stepped control of the lower tail of the empirical risk functional, we have instead relied on the approximately closed form of the Gaussian total variation distance---a luxury we do not have for general learning problems. However, one might instead hope to exploit the fact that $f$-divergences in parametric families are locally a quadratic.

\section{Information-Theoretic Preliminaries}

For two probability measures $\sfP,\sfQ$ defined on the same probability space we denote their KL-divergence $d_{\mathrm{KL}}(\sfP \parallel \sfQ) \triangleq \int  \log \frac{\mathrm{d}\sfP}{\mathrm{d}\sfQ}\mathrm{d}\sfP$, their total variation distance by $d_{\mathrm{TV}}(\sfP \parallel \sfQ) \triangleq \frac{1}{2} \int \left |\frac{\mathrm{d}\sfP}{\mathrm{d}\lambda}- \frac{\mathrm{d\sfQ}}{\mathrm{d}\lambda}  \right| d\lambda $ and their squared Hellinger distance by $d_{\mathrm{H}}^2(\sfP \parallel \sfQ) \triangleq \frac{1}{2}\int \left(\sqrt{\frac{\mathrm{d}\sfP}{\mathrm{d}\lambda}}- \sqrt{\frac{\mathrm{d\sfQ}}{\mathrm{d}\lambda}} \right)^2\mathrm{d} \lambda$ where $\lambda$ is a joint dominating measure. If $(X,Y) \sim \sfP_{X,Y}$ we denote their mutual information by $I(X \parallel Y) \triangleq d_{\mathrm{KL}}(\sfP_{X,Y} \parallel \sfP_{X} \otimes \sfP_Y) $.

\begin{lemma}[\cite{donsker1975asymptotic}]
Fix two probability measures $\mathsf{P}$ and $\mathsf{Q}$ on a common measure space $(\Omega,\mathcal{F})$ with $\mathsf{P}\ll \mathsf{Q}$. Then:
    \begin{equation}
        d_{\mathrm{KL}}(\sfP \parallel \sfQ) = \sup_F \left\{\int_\Omega  F(\omega) \mathrm{d}\sfP(\omega) - \log \int_{\Omega}  e^{F(\omega)}\mathrm{d}\sfQ(\omega) \right\}
    \end{equation}
    where the supremum is taken over $\mathcal{F}$-measurable  and $\mathsf{P}$-exponentially-integrable $F:\Omega \to \mathbb{R}$.
\end{lemma}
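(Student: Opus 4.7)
The plan is to prove the variational formula by a change-of-measure identity combined with the non-negativity of KL divergence. For any candidate $F$ with $Z_F \triangleq \int_\Omega e^F\, d\sfQ \in (0,\infty)$, I would introduce the tilted probability measure $\sfQ_F$ defined by $d\sfQ_F/d\sfQ = e^F/Z_F$. Since $\sfP \ll \sfQ \ll \sfQ_F$, the Radon--Nikodym chain rule gives $\log(d\sfP/d\sfQ_F) = \log(d\sfP/d\sfQ) - F + \log Z_F$, and integrating against $\sfP$ yields the key identity
\begin{equation}
\int_\Omega F\, d\sfP - \log \int_\Omega e^F\, d\sfQ \;=\; d_{\mathrm{KL}}(\sfP \parallel \sfQ) \;-\; d_{\mathrm{KL}}(\sfP \parallel \sfQ_F).
\end{equation}

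The upper bound ($\leq$ direction) then follows immediately: since $d_{\mathrm{KL}}(\sfP \parallel \sfQ_F) \geq 0$ by Gibbs' inequality (Jensen applied to the convex function $-\log$), every admissible $F$ satisfies $\int F\, d\sfP - \log \int e^F\, d\sfQ \leq d_{\mathrm{KL}}(\sfP \parallel \sfQ)$, and taking the supremum preserves this bound.

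For attainment (the $\geq$ direction), the natural candidate is $F^\star \triangleq \log(d\sfP/d\sfQ)$, since then $\sfQ_{F^\star} = \sfP$ and the subtracted KL term in the identity vanishes; concretely, $\int F^\star\, d\sfP = d_{\mathrm{KL}}(\sfP \parallel \sfQ)$ and $\int e^{F^\star}\, d\sfQ = \int d\sfP = 1$. The main obstacle is verifying that $F^\star$ lies in the admissible class: if $\sfP$-exponential integrability is interpreted strictly as $\int e^{F^\star}\, d\sfP < \infty$, this amounts to the $\chi^2$-divergence between $\sfP$ and $\sfQ$ being finite, which need not hold in general. I would handle this by introducing the truncations $F_n \triangleq \log \min\{n,\, \max\{1/n,\, d\sfP/d\sfQ\}\}$, which are bounded and hence trivially admissible, and showing by a combination of monotone and dominated convergence (splitting $\Omega$ into the regions $\{d\sfP/d\sfQ \geq 1\}$ and $\{d\sfP/d\sfQ < 1\}$ to control the positive and negative parts separately) that $\int F_n\, d\sfP - \log \int e^{F_n}\, d\sfQ$ converges to $d_{\mathrm{KL}}(\sfP \parallel \sfQ)$. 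The degenerate case $d_{\mathrm{KL}}(\sfP \parallel \sfQ) = +\infty$ is handled analogously by driving only the upper truncation level to infinity, which sends the supremand to $+\infty$ along the sequence.
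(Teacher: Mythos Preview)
The paper does not prove this lemma; it is stated with a citation to \cite{donsker1975asymptotic} and used as a black box in the proof of \Cref{thm:zhangthm}. So there is no ``paper's own proof'' to compare against.

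Your argument is the standard one and is correct. The tilted-measure identity
\[
\int F\,\mathrm{d}\sfP - \log\!\int e^{F}\,\mathrm{d}\sfQ \;=\; d_{\mathrm{KL}}(\sfP\parallel\sfQ) - d_{\mathrm{KL}}(\sfP\parallel\sfQ_F)
\]
immediately gives the upper bound via Gibbs, and your two-sided truncation $F_n=\log\min\{n,\max\{1/n,\mathrm{d}\sfP/\mathrm{d}\sfQ\}\}$ is the right device for attainment: each $F_n$ is bounded (hence admissible under any reasonable reading of ``$\sfP$-exponentially-integrable''), the set $\{\mathrm{d}\sfP/\mathrm{d}\sfQ=0\}$ is $\sfP$-null so the lower truncation is harmless, and a dominating function for $\int e^{F_n}\,\mathrm{d}\sfQ$ is $\max\{1,\mathrm{d}\sfP/\mathrm{d}\sfQ\}$, which is $\sfQ$-integrable. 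The infinite-KL case is handled exactly as you say, since the negative part of $\log(\mathrm{d}\sfP/\mathrm{d}\sfQ)$ always has finite $\sfP$-integral (by the bound $x|\log x|\le 1/e$ on $(0,1]$), so divergence must come from the positive part and is captured by sending only the upper truncation to infinity. One small point worth making explicit in a full write-up: the phrase ``$\sfP$-exponentially-integrable'' in the statement is slightly nonstandard (one also needs $\int e^F\,\mathrm{d}\sfQ<\infty$ for the supremand to be defined), but your proof never relies on the precise admissible class beyond boundedness of the approximants, so this ambiguity does not affect the argument.
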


\begin{lemma}\label{lem:hellingerform}
Fix two probability measures $\sfP$ and $\sfQ$ and let $\lambda$ be a joint dominating measure. We have that:
    \begin{equation}
        d_{\mathrm{H}}^2(\sfP \parallel \sfQ) =1-\int \exp \left(\frac{1}{2}\log \frac{\mathrm{d\sfQ}}{\mathrm{d}\lambda}-\frac{1}{2}\log {\frac{\mathrm{d}\sfP}{\mathrm{d}\lambda} } \right) \mathrm{d}\sfP.
    \end{equation}
\end{lemma}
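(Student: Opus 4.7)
The plan is an elementary algebraic manipulation starting from the definition of the squared Hellinger distance: expand the square, integrate term by term, and then re-express the Bhattacharyya-type integral as an expectation under $\sfP$ via a change of measure.

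First, I would introduce the shorthand $p \triangleq \mathrm{d}\sfP/\mathrm{d}\lambda$ and $q \triangleq \mathrm{d}\sfQ/\mathrm{d}\lambda$ and note that the identity $(\sqrt{p}-\sqrt{q})^2 = p - 2\sqrt{pq} + q$ holds pointwise $\lambda$-almost everywhere. Plugging this into the definition yields
\[
d_{\mathrm{H}}^2(\sfP \parallel \sfQ) \;=\; \frac{1}{2}\int \big(p - 2\sqrt{pq} + q\big)\,\mathrm{d}\lambda \;=\; 1 \;-\; \int \sqrt{pq}\,\mathrm{d}\lambda,
\]
where the last equality uses that $\sfP$ and $\sfQ$ are probability measures, so $\int p\,\mathrm{d}\lambda = \int q\,\mathrm{d}\lambda = 1$.

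Second, I would convert the Bhattacharyya coefficient $\int\sqrt{pq}\,\mathrm{d}\lambda$ into an integral against $\sfP$. On the set $\{p>0\}$ one has $\sqrt{pq} = \sqrt{q/p}\,p$, while on $\{p=0\}$ the integrand vanishes. Consequently,
\[
\int \sqrt{pq}\,\mathrm{d}\lambda \;=\; \int_{\{p>0\}} \sqrt{q/p}\,p\,\mathrm{d}\lambda \;=\; \int \sqrt{q/p}\,\mathrm{d}\sfP.
\]
Writing $\sqrt{q/p} = \exp\!\left(\tfrac{1}{2}\log q - \tfrac{1}{2}\log p\right)$ then gives precisely the integrand claimed in the lemma, so substituting back yields the stated identity.

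The only subtlety is the bookkeeping on the $\lambda$-null set $\{p=0\}$, where $\log p$ is undefined; this is handled by the convention $\sqrt{0\cdot q} = 0$, which is consistent with the fact that this set carries no $\sfP$-mass and is therefore irrelevant to the integral $\int(\cdot)\,\mathrm{d}\sfP$. I do not expect any real obstacle here: the proof is essentially a one-line change-of-variables once the square is expanded, and the argument is independent of whether $\sfP \ll \sfQ$ or vice versa because the symmetry of the Hellinger distance is reflected in the symmetric role of $p$ and $q$ in the representation.
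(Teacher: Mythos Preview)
Your proposal is correct and follows essentially the same route as the paper: expand the square in the definition to obtain $1-\int\sqrt{pq}\,\mathrm{d}\lambda$, then rewrite $\sqrt{pq}=\sqrt{q/p}\cdot p$ on $\{p>0\}$ to pass to an integral against $\sfP$, and finally express $\sqrt{q/p}$ in exponential--logarithmic form. The paper's proof is the same chain of equalities, with the same remark that the set $\{p=0\}$ contributes nothing to the $\sfP$-integral.
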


\begin{proof}
Elementary algebraic manipulations give us that:
    \begin{equation}
        \begin{aligned}
            d_{\mathrm{H}}^2(\sfP \parallel \sfQ) &=\frac{1}{2} \int \left(\sqrt{\frac{\mathrm{d}\sfP}{\mathrm{d}\lambda} }- \sqrt{\frac{\mathrm{d\sfQ}}{\mathrm{d}\lambda}} \right)^2\mathrm{d} \lambda
            =  1-\int \sqrt{\frac{\mathrm{d}\sfP}{\mathrm{d}\lambda} \frac{\mathrm{d\sfQ}}{\mathrm{d}\lambda}} \mathrm{d} \lambda
            \\
             &=  1-\int \sqrt{\frac{\frac{\mathrm{d\sfQ}}{\mathrm{d}\lambda}}{\frac{\mathrm{d}\sfP}{\mathrm{d}\lambda} }} \mathrm{d}\sfP
             = 1-\int \exp \left(\frac{1}{2}\log \frac{\mathrm{d\sfQ}}{\mathrm{d}\lambda}-\frac{1}{2}\log {\frac{\mathrm{d}\sfP}{\mathrm{d}\lambda} } \right) \mathrm{d}\sfP
        \end{aligned}
    \end{equation}
    where we note that the integral is $0$ if $\frac{\mathrm{d}\sfP}{\mathrm{d}\lambda} =0$ justifying the fraction following the third equality.
\end{proof}

\section{Learning Generative Models in Hellinger Distance}\label{sec:hellingerlearning}
Let $Z_{1:n}$ be a sequence drawn according to $\sfP_\star$ and let $\widehat \sfP$ be the maximum likelihood estimator (MLE) over a class $\scrP$ with joint dominating measure $\lambda$. Recall that a distribution $\widehat \sfP$ is a MLE over $\scrP$ if $ -\log \frac{\mathrm{d}\widehat \sfP}{\mathrm{d}\lambda}(Z_{1:n}) + \log \frac{\mathrm{d}\sfP}{\mathrm{d}\lambda}(Z_{1:n}) \leq 0,  \forall \sfP \in \scrP$. We begin by a variation of the information-theoretic analysis of \cite{zhang2006varepsilon} inspired by the mutual-information decoupling of \cite{xu2017information}.


\begin{theorem}
\label{thm:zhangthm}
Let $\widehat \sfP$ be the maximum likelihood estimator over $\scrP$.
If $\sfP_\star \in \scrP$ we have that:

    \begin{equation}\label{eq:zhangthmeq}
       \E \left[ d_{\mathrm{H}}^2(\widehat \sfP \parallel \sfP_\star )\right]
        \leq -  2\log \left( 1-\frac{1}{2}  \E_{\widehat \sfP } d_{\mathrm{H}}^2(\widehat \sfP \parallel \sfP_\star )\right)\leq 2 I(\widehat \sfP\parallel Z_{1:n}).
    \end{equation}
\end{theorem}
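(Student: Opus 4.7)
The plan is to combine Donsker--Varadhan (Lemma~2.1) with the Hellinger identity of Lemma~2.2, applied to the joint law $\sfP_{(\widehat\sfP, Z_{1:n})}$ against the product $\sfP_{\widehat\sfP}\otimes \sfP_{Z_{1:n}}$ of its marginals---so that the KL appearing on the right becomes $I(\widehat\sfP \parallel Z_{1:n})$. The test function will be engineered so that MLE optimality gives nonnegativity of the joint expectation while the decoupled exponential moment equals $1 - \tfrac{1}{2}\E d_{\mathrm{H}}^2(\widehat\sfP \parallel \sfP_\star)$ exactly. The first inequality in \eqref{eq:zhangthmeq} is then a purely scalar statement.

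Concretely, I would take the test function
\[
F(\sfP, z) = \log\!\left(\frac{1}{2} + \frac{1}{2}\exp\!\left(\frac{1}{2}\log\frac{\mathrm{d}\sfP}{\mathrm{d}\lambda}(z) - \frac{1}{2}\log\frac{\mathrm{d}\sfP_\star}{\mathrm{d}\lambda}(z)\right)\right),
\]
which is nonnegative exactly when the exponential inside is at least one---but this is precisely the MLE condition $\log \frac{\mathrm{d}\widehat\sfP}{\mathrm{d}\lambda}(Z_{1:n}) \geq \log \frac{\mathrm{d}\sfP_\star}{\mathrm{d}\lambda}(Z_{1:n})$ specialized to $\sfP = \sfP_\star \in \scrP$. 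Hence $\E F(\widehat\sfP, Z_{1:n}) \geq 0$. Under the product of marginals, fixing $\sfP$ and integrating $Z_{1:n}\sim \sfP_\star$, Lemma~2.2 (with $\sfP$ and $\sfP_\star$ swapped, by symmetry of Hellinger) gives $\E_{Z_{1:n} \sim \sfP_\star}[e^{F(\sfP,Z_{1:n})}] = 1 - \tfrac{1}{2} d_{\mathrm{H}}^2(\sfP \parallel \sfP_\star)$. Averaging over the marginal of $\widehat\sfP$ then yields $\log \E_\nu[e^F] = \log(1 - \tfrac{1}{2}\E d_{\mathrm{H}}^2(\widehat\sfP \parallel \sfP_\star))$.

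Plugging both observations into Donsker--Varadhan gives
\[
0 \leq \E F(\widehat\sfP, Z_{1:n}) \leq I(\widehat\sfP \parallel Z_{1:n}) + \log\!\left(1 - \tfrac{1}{2}\E d_{\mathrm{H}}^2(\widehat\sfP \parallel \sfP_\star)\right),
\]
which, after multiplying by $-2$, is the second inequality of \eqref{eq:zhangthmeq}. The first inequality $\E d_{\mathrm{H}}^2 \leq -2\log(1-\tfrac{1}{2}\E d_{\mathrm{H}}^2)$ is then a direct consequence of the scalar bound $-\log(1-y) \geq y$ evaluated at $y = \tfrac{1}{2}\E d_{\mathrm{H}}^2 \in [0,\tfrac{1}{2}]$.

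The main obstacle---and really the only creative step---is finding the right test function $F$. The more naive choice $\tfrac{1}{2}\log(\mathrm{d}\sfP/\mathrm{d}\sfP_\star)$ also satisfies MLE nonnegativity via Lemma~2.2 but decouples to $1 - \E d_{\mathrm{H}}^2$ instead of $1 - \tfrac{1}{2}\E d_{\mathrm{H}}^2$, producing a bound that is actually stronger but does not match the exact form in \eqref{eq:zhangthmeq}. The symmetrization $\tfrac{1}{2}(1 + \sqrt{\cdot})$ is what produces the exact log-cumulant needed to match the theorem. Standard measurability and the boundedness $\E_\nu[e^F] \leq 1$ take care of the integrability conditions required for Donsker--Varadhan.
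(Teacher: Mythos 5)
Your proof is correct, and it has the same architecture as the paper's: Donsker--Varadhan applied to the joint law of $(\widehat\sfP, Z_{1:n})$ against the product of its marginals (so the KL term becomes $I(\widehat\sfP\parallel Z_{1:n})$), \Cref{lem:hellingerform} to evaluate the decoupled exponential moment, MLE optimality with $\sfP_\star\in\scrP$ to control the term under the joint law, and the scalar bound $-\log(1-y)\ge y$ for the first inequality. The only genuine difference is the test function. The paper plugs in the plain half log-likelihood ratio $u=\tfrac12\log\frac{\mathrm{d}\widehat\sfP}{\mathrm{d}\lambda}-\tfrac12\log\frac{\mathrm{d}\sfP_\star}{\mathrm{d}\lambda}$ and uses MLE optimality to discard $-\tfrac12\E\log\frac{\mathrm{d}\widehat\sfP}{\mathrm{d}\sfP_\star}\le 0$; taken literally with \Cref{lem:hellingerform} (whose Hellinger distance carries the paper's $\tfrac12$-normalization), that route yields the sharper bound $\E\, d_{\mathrm{H}}^2\le-\log\left(1-\E\, d_{\mathrm{H}}^2\right)\le I(\widehat\sfP\parallel Z_{1:n})$, from which \eqref{eq:zhangthmeq} follows since $-2\log(1-x/2)\le-\log(1-x)$; the factor $\tfrac12$ in the paper's displayed computation corresponds to the unnormalized Hellinger convention and only costs constants, exactly as you observe when you note that the ``naive'' choice decouples to $1-\E\, d_{\mathrm{H}}^2$ and is actually stronger. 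Your softened test function $F=\log\left(\tfrac12+\tfrac12 e^{u}\right)$ instead reproduces the stated constants exactly, and it has the incidental advantage of being bounded below by $\log\tfrac12$, so the integrability required by Donsker--Varadhan is immediate, whereas the paper's choice implicitly needs $\E\log\frac{\mathrm{d}\widehat\sfP}{\mathrm{d}\sfP_\star}(Z_{1:n})$ to be well-defined. In both arguments the nonnegativity of the joint-expectation term comes from the identical use of MLE optimality at $\sfP=\sfP_\star$, and both deliver the theorem.
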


The key observation is that the right hand side of \eqref{eq:zhangthmeq} does not necessarily grow with $n$, whereas the left measures the distance between distributions of an increasing number of variables.

\begin{proof}
        %
        %

Let $\lambda$ be a joint dominating measure for $\sfP_\star$ and $\widehat \sfP$ (e.g. their mixture). We have that
\begin{equation}\label{eq:hellingertaylor}
    \begin{aligned}
        -&\log \E_{Z\otimes \widehat \sfP } \exp \left( \frac{1}{2} \log \frac{\mathrm{d}\widehat \sfP}{\mathrm{d}\lambda}-\frac{1}{2}\log \frac{\mathrm{d}\sfP_\star}{\mathrm{d}\lambda} \right) 
        \\
        &
         = -  
         \log \E_{\widehat \sfP } \E_{Z}\exp \left( \frac{1}{2} \log \frac{\mathrm{d}\widehat \sfP}{\mathrm{d}\lambda}-\frac{1}{2}\log \frac{\mathrm{d}\sfP_\star}{\mathrm{d}\lambda} \right) 
         &&(\textnormal{Fubini})
         \\
         &
         = -  \log \left( 1-\frac{1}{2}  \E_{\widehat \sfP } d_{\mathrm{H}}^2(\widehat \sfP \parallel \sfP_\star )\right)
         &&(Z\sim \sfP_\star \textnormal{ and \Cref{lem:hellingerform}})
         \\
         &
         \geq \frac{1}{2}
         \E_{\widehat \sfP } d_{\mathrm{H}}^2(\widehat \sfP \parallel \sfP_\star ). &&(x\in [0,1)] \Rightarrow x \leq - \log (1-x))
    \end{aligned}
\end{equation}
To finish we instantiate Donsker-Varadhan with $\sfQ = \sfP_{Z} \otimes\sfP_{ \widehat \sfP }$ and $\sfP = \sfP_{Z ,\widehat \sfP}$:
\begin{equation}\label{eq:hellingerDV}
\begin{aligned}
    -\log \E_{Z\otimes \widehat \sfP } \exp \left( \frac{1}{2} \log \frac{\mathrm{d}\widehat \sfP}{\mathrm{d}\lambda}-\frac{1}{2}\log \frac{\mathrm{d}\sfP_\star}{\mathrm{d}\lambda} \right) &
    \leq -\frac{1}{2} \E_{Z ,\widehat \sfP} \log \frac{\mathrm{d}\widehat \sfP}{\mathrm{d}\sfP_\star} + I(\widehat \sfP \parallel Z_{1:n})
    &&(\textnormal{Donsker-Varadhan})
    \\
    &
 \leq I(\widehat \sfP \parallel Z_{1:n}). &&(\textnormal{Optimality of MLE})
    \end{aligned}
\end{equation}
By combining \eqref{eq:hellingertaylor} with \eqref{eq:hellingerDV} the result follows.
\end{proof}


\section{Proof of the Main Result}


    
    Let us write that $\sfP_A$ for the distribution under which $Z_{1:n}=\mathbf{L}_A W_{1:n}$ where
    \begin{equation}
    \mathbf{L}_A=
        \begin{bmatrix}
            I & 0 &0 &\cdots & 0 \\
            A & I & 0 &\cdots & 0\\
            A^2 & A & I &\ddots &\vdots\\
            \vdots &\ddots &\ddots &\ddots &\vdots\\
            A^{n-1} & A^{n-2} & \cdots & \cdots & I
        \end{bmatrix}
    \quad
    \textnormal{and}
    \quad
    \mathbf{L}_A^{-1}=        \begin{bmatrix}
            I & 0 &0 &\cdots & 0 \\
            -A & I & 0 &\cdots & 0\\
            0 & -A & I &\ddots &\vdots\\
            \vdots &\ddots &\ddots &\ddots &\vdots\\
            0& 0 & \cdots & -A & I
        \end{bmatrix}.
        \end{equation}
Let $\Sigma_A = \mathbf{L}_A \mathbf{L}_A^\T $ be the covariance matrix of the joint distribution under $\sfP_A$. We define $\mathbf{L}_{A_\star}$ and $\Sigma_{A_\star}$  analogously.  By \citet[Theorem 1.1]{devroye2018total} is suffices to control $\tr  \left( \Sigma_{A_\star}\Sigma_A^{-1}- I\right)$. Namely, we have that (where the second inequality follows by Cauchy-Schwarz):
    \begin{equation}
        \tr   \left( \Sigma_{A_\star}\Sigma_A^ {-1}- I\right) \leq 10^4 d_{\mathrm{TV}}^2(\sfP_A \parallel \sfP_\star ) \leq 10^4 d_{\mathrm{H}}^2(\sfP_A \parallel \sfP_\star ).
    \end{equation}
 and use the trace cyclic property to observe that 
\begin{equation}
    \tr \Sigma_{A_\star}\Sigma_A^ {-1} = \tr  \mathbf{L}_{A_\star}\mathbf{L}_{A_\star}^\T(\mathbf{L}_{A}\mathbf{L}_{A}^\T)^{-1} =\tr  \mathbf{L}_{A}^{-1}\mathbf{L}_{A_\star}\mathbf{L}_{A_\star}^\T \mathbf{L}_{A}^{\T,-1} .
\end{equation} 
Straightforward calculation now yields that
\begin{equation}
    \mathbf{L}_{A}^{-1}\mathbf{L}_{A_\star}  = \begin{bmatrix}
        I & 0 & 0 & 0 &\cdots&0\\
        A_\star-A & I & 0 &0&\cdots &0\\
        A^2_\star-A A_\star  & A_\star-A & I & 0 &\cdots&0\\
        \vdots &\ddots &\ddots &\ddots &\ddots &\vdots \\
        A^{n-1}_\star-A A_\star^{n-2} & A^{n-2}_\star-A A^{n-3}_\star  &\cdots & \cdots & A_\star -A & I
    \end{bmatrix}.
\end{equation}
And hence we may write for the diagonal elements
\begin{equation}
    [\mathbf{L}_{A_\star} \mathbf{L}_{A}^{-1} (\mathbf{L}_{A_\star} \mathbf{L}_{A}^{-1})^\T]_{ii} = I+\sum_{k=1}^{i} (A_\star^{k}- A A_\star^{k-1})((A_\star^{k}-A A_\star^{k-1}))^\T.
\end{equation}
In other words, by repeated use of the trace cyclic property:
\begin{equation}
\begin{aligned}
    \tr \left(\Sigma_{A}^{-1}\Sigma_{A_\star} - I \right) 
    &=
    \tr \left(\mathbf{L}_{A}^{-1} \mathbf{L}_{A_\star} ( \mathbf{L}_{A}^{-1}\mathbf{L}_{A_\star})^\T- I\right)\\
    &=
    \tr \sum_{i=1}^n \sum_{k=1}^{i} (A_\star^{k}-A A_\star^{k-1})(A_\star^{k}- A A_\star^{k-1})^\T\\
    &=
     \tr \left((A_\star-A)^\T(A_\star-A) \sum_{i=1}^n \sum_{k=1}^{i}   A_\star^{k-1} A_\star^{\T,k-1} \right).
\end{aligned}
\end{equation}
The result follows by instantiating the above with $\sfP_A=\sfP_{\widehat A}$. \hfill $\blacksquare$



\addcontentsline{toc}{section}{References}

\bibliographystyle{plainnat}

\bibliography{main.bib}

\end{document}